\newtheorem{lemma}{Lemma}[section]
\newtheorem{proposition}{Proposition}[section]
\newtheorem{remark}{Remark}[section]
\newcommand{\pr}{\mathbb{P}}
\newcommand{\E}{\mathbb{E}}
\newcommand{\real}{\mathbb{R}}
\newcommand{\inner}[2]{\langle #1,#2\rangle}
\newcommand{\ind}{1{\hskip -2.5 pt}\hbox{I}}
\let\hat\widehat
\newenvironment{enum}{
\begin{enumerate}
  \setlength{\itemsep}{1pt}
  \setlength{\parskip}{0pt}
  \setlength{\parsep}{0pt}
}{\end{enumerate}}
\begin{document}

%
%
%
%
%

\begin{center}
{\LARGE A Conformal Prediction Approach to Explore\\
\vspace{3mm} Functional Data}\\

\vspace{5mm}

Jing Lei,\ \ Alessandro Rinaldo,\ \ Larry Wasserman \\
\vspace{3mm}
{\it Carnegie Mellon University}\\
\vspace{2mm}
\today
\end{center}

\begin{abstract}
    \centering
  \vspace{3 true pt}
  \begin{minipage}{0.8\textwidth}
\indent
  This paper applies conformal prediction techniques to compute
  simultaneous prediction bands and clustering trees for functional data.
  These tools can be used to detect outliers and clusters.
  Both our prediction bands and clustering trees provide prediction sets
  for the underlying stochastic process with a guaranteed finite sample behavior,
  under no distributional assumptions.
  The prediction sets are also informative in that they
  correspond to the high density region of the underlying process.
  While ordinary conformal prediction has high computational cost for functional data,
  we use the inductive conformal predictor, together with
  several novel choices of conformity scores, to simplify
  the computation.
   Our methods are illustrated on some real data examples.
\end{minipage}
\end{abstract}

\section{Introduction}
\label{intro}
Functional data analysis has been the focus of much research efforts in the statistics
and machine learning community
in the last decade.
In functional data analysis, the data points are functions rather than
scalars or vectors.  The functional perspective provides a powerful modeling
tool for many natural
processes with certain smoothness structures.  Typical examples arise
in temporal-spatial statistics, longitudinal data, genetics, and engineering.
The literature on functional data analysis
is growing very quickly and familiar techniques like
regression, classification, principal component analysis and clustering
have all been extended to
functional data.
The books by
\cite{ramsay} and \cite{ferraty2006nonparametric}
have established the state of the art.

The focus of this paper is exploratory analysis and visualization for
functional
data, including detection of outliers, median sets, and high
density sets.
Due to its infinite dimensional nature, visualization is a challenging
problem for functional data since it is hard to see directly which curves
are typical and which are abnormal.
In the literature, many classical notions and tools
 for ordinary vector data have
been extended to functional data, using
either finite dimensional projection methods such as
functional principal components (FPCA) and spline basis, or
componentwise methods.  For example, \cite{HyndmanS10}
developed functional bagplots and boxplots, where
a band in functional space is obtained by first
applying bivariate bagplots \cite{RousseeuwRT99} on the first
two principal component scores, and projecting the bivariate sets
back onto the functional space.  In \cite{SunG11},
the sample functions are ordered using
a notion of functional data depth \cite{Lopez-PintadoR09} and a
band is constructed by taking componentwise maximum and minimum
of sample quantile functions. Other related work in outlier detection
mostly
use functional data depth to order the curves, and then
apply robust estimators to detect outliers, see
\cite{CuevasFF07}, \cite{FebreroGG-M08}, \cite{Gervini09}.

In this paper, we visualize functional quantiles in the form of
\emph{simultaneous prediction bands}.  That is, for a given level
of coverage, we construct a band that covers a random curve
drawn from the underlying process. Our prediction bands are constructed by
combining
the finite dimensional projection approach and the idea of
\emph{conformal prediction},
a general approach to construct distribution free finite sample valid prediction
sets \cite{VovkGS08b,ShaferV08}.  Although
originally developed as a tool for online
prediction, conformal prediction methods have
been proven to be useful for general
nonparametric inferences,
yielding robust and efficient prediction sets \cite{LeiRW11,LeiRW12}.
To apply conformal methods to functional data, a major challenge
is computation efficiency.
Computing the bands with finite dimensional projection
is infeasible using ordinary conformal
prediction methods because the conformal prediction sets
are usually hard to characterize.
Here we use the inductive conformal method
which allows efficient implementation for functional data when
combined with some carefully chosen conformity scores. The resulting
prediction bands
always give correct finite sample coverage, without
any regularity conditions on the underlying process.
Unlike many existing functional boxplots, our bands tend to reflect
the ``high density'' regions in the functional spaces and may
have disconnected slices.
In some cases, such a property can also reveal
other salient structures in the data such as clusters.

Another contribution of this paper is
the construction of clustering trees with finite sample interpretation
for functional
data. In classical vector cases, clustering,
together with the aforementioned prediction sets,
is closely related to density level sets \cite{Hartigan75,RinaldoW10,RinaldoSNW10}.  However,
in functional spaces, the density is not well defined because
there is no $\sigma$-finite dominating measure.
In \cite{ferraty2006nonparametric}, a notion of ``pseudo density''
is used instead of density.  In context of functional data, most methods
use finite dimensional projections, combined with classical clustering
methods such as
K-means \cite{TarpeyK03,AntoniadisBCP10} and Gaussian mixtures
\cite{JamesS03,ShiW08}.  A componentwise approach to functional data clustering is studied by \cite{DelaigleHB12}.
In this paper we construct
 clustering trees for functional data by combining the pseudo density idea
  \cite{ferraty2006nonparametric} and conformal prediction.
Each level of our cluster tree corresponds to an estimated
prediction set, with distribution free, finite sample coverage.
The clustering trees are therefore naturally indexed by the level of coverage.
Moreover, the prediction sets at a higher
level on the tree correspond to regions with higher pseudo density.

To our knowledge, this paper is the first to do prediction and visualization
of functional data with finite sample guarantees.
In Section \ref{sec:func_band}
we introduce the problem of prediction bands for functional data and
the general idea of conformal prediction method under this context.
In Section \ref{sec:extend}
we develop the extended conformal prediction and apply it to efficiently construct
simultaneous prediction bands for function data with good finite sample property.
In Section \ref{sec:pseduo-density} we develop our method of conformal cluster
tree.  Both methods are illustrated through real data examples.

\section{Notation and Background}\label{sec:func_band}
\subsection{Prediction Sets for Functional Data}
Let
$X_1(\cdot), \cdots, X_n(\cdot)$
denote $n$ random functions drawn
from an unknown distribution $P$ over the set of functions on $[0,1]$ with finite energy: ${\Omega}= L_2[0,1]$.
The distribution $P$ is defined on an appropriate
$\sigma$-field ${\cal F}$ on ${\Omega}$.
Given the data and a number
$0<\alpha < 1$, we will construct a set of functions
${\cal C}_n \subset {\Omega}$ such that, for all $P$ and all $n$,
\begin{equation}\label{eq::first-goal}
\mathbb{P}(X_{n+1}\in {\cal C}_n)\geq 1-\alpha
\end{equation}
where $X_{n+1}$ denotes a future function drawn from $P$
and $\mathbb{P}$ denotes the probability corresponding to $P^{n+1}$, the product measure induced by $P$.

Requiring
(\ref{eq::first-goal})
to hold is very ambitious and may be more than needed.
If we are only interested in the main structural features
of the curve, then
we instead aim for the more modest goal that, for all $P$ and all $n$,
\begin{equation}
\mathbb{P}(\Pi(X_{n+1})\in {\cal C}_n)\geq 1-\alpha
\end{equation}
where $\Pi$ is a mapping into a finite dimensional function space
${\Omega}_p\subset {\Omega}$.
The projection $\Pi$ may correspond to the subspace spanned by
the first few functions in Fourier basis, wavelet basis, or any other
orthonormal basis of $L_2[0,1]$.

In practice it is often of interest (for example, for visualization purposes)
to have prediction bands.
A prediction band $\mathcal B_n$ is a prediction set of the form
$\mathcal B_n=\{X(\cdot)\in L_2[0,1]: X(t)\in B_n(t),~\forall~t\in [0,1]\}$, where for each
$t$, $B_n(t)\subseteq\real^1$ can be
expressed as the union of finitely many intervals.  Most existing
prediction bands for functional data with provable coverage relies on
the assumption of Gaussianity
(see, for example, \cite{YaoMW05}).
It is desirable to construct distribution free, finite sample prediction
bands for functional data under general distributions.

\subsection{Conformal Prediction Methods and Variants}

Conformal inference
\cite{VovkGS08b,ShaferV08} is a very general theory of prediction, focused on
sequential prediction.
For our purposes, we only need the following batch version.
Given the observed objects
(random variables, random vectors, random functions, etc.)
$X_1,\ldots, X_n$
we want a prediction set for a new object $X_{n+1}$.
Assume that $X_i\in {\Omega}$ and
let $x\in {\Omega}$ be a fixed, arbitrary object.
We will test the null hypothesis that
$X_{n+1}=x$ and then take ${\cal C}_n$ to be the set of $x$'s that are
not rejected by the test.
Here are the details.

Define the augmented data
${\sf aug}(x) = \{X_1,\ldots, X_n,x\}$.
Define {\em conformity scores}
$\sigma_1,\ldots, \sigma_{n+1}$
where $\sigma_i = g(X_i,{\sf aug}(x))$ for
some function $g$.
(Actually, in \cite{VovkGS08b} they omit $X_i$ from $g$
when defining $\sigma_i$. We discuss this point at the end of this section.)
For $i=n+1$, the score is defined to be
$\sigma_{n+1} = g(x,{\sf aug}(x))$.
The conformity score measures how similar $X_i$ is to
the rest of the data.
An example is
$\sigma_i = - \int (X_i(t) - \overline{X}^x(t))^2 dt$
where
\begin{center}
$
\overline{X}^x(t) = (n+1)^{-1}[x(t)+\sum_{i=1}^n X_i(t)].
$
\end{center}
In fact, we will use some other conformity scores
that are better adapted to the
data distribution.

Consider testing the null hypothesis
$H_0: X_{n+1}=x$.
When $H_0$ is true,
the objects in the set
${\sf aug}(x) = \{X_1,\ldots, X_n,x\}$
are exchangeable and so
the ranks of the conformity scores are uniformly distributed.
Thus, the p-value
\begin{equation}
\pi(x) = \frac{\sum_{i=1}^{n+1} \ind(\sigma_i \leq \sigma_{n+1})}{n+1}
\end{equation}
is uniformly distributed over $\{1/(n+1),2/(n+1),...,1\}$
and is a valid p-value for the test in the sense that
$\mathbb P[\pi_n(X_{n+1})\ge \alpha]\ge 1-\alpha$.
We now invert the test, that is, we collect all the non-rejected null hypotheses:
\begin{equation}
{\cal C}_n = \{ x:\ \pi(x) \geq \alpha\}.
\end{equation}
It follows from the above argument that
\begin{equation}
\mathbb{P}(X_{n+1}\in {\cal C}_n) \geq 1-\alpha\end{equation}
for any $P$ and $n$.
We refer to the above construction as the standard
conformal method.
Next we discuss the use of inductive conformal method which,
as we shall see, has some computational advantages.
The general idea is first given in
Section 4.1 of \cite{VovkGS08b}.

\begin{center}
\fbox{\parbox{4.5in}{
\begin{center}{\sf Algorithm 1:
Inductive Conformal Predictor}\end{center}
 \textbf{Input:} Data $X_1,...,X_n$, confidence level $1-\alpha$, $n_1< n$.\\
 \textbf{Output:} ${\cal C}_n$.
\begin{enum}
\item Split data randomly into two parts $\mathbf
    X_1=\{X_{1},...X_{n_1}\}$, and
$\mathbf X_2=\{X_{n_1+1},...,X_{n}\}$. Let
$n_2=n-n_1$.
\item Let $g:\Omega\mapsto \mathbb R$ be a function constructed from $X_1,\ldots, X_{n_1}$.
\item Define $\sigma_i = g(X_{n_1 +i})$ for $i=1,\ldots, n_2$.
Let $\sigma_{(1)} \leq \cdots \leq \sigma_{(n_2)}$ denote the ranked values.
\item Let ${\cal C}_n=\{x:\ g(x) \geq \lambda\}$ with
$\lambda = \sigma_{(\lceil (n_2+1)\alpha\rceil -1)}$.
\end{enum}
}}
\end{center}

\begin{lemma}\label{lem:gen_conformal}
Let $\mathcal C_n$ be the output of
Algorithm 1.  For all $n$ and $P$,
$\mathbb P(X_{n+1}\in {\cal C}_n)\geq 1-\alpha$.
\end{lemma}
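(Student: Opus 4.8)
The plan is to use the sample splitting in Algorithm 1 to reduce the claim to an elementary statement about the rank of one exchangeable random variable. The crucial observation is that the score function $g$ is constructed from $\mathbf{X}_1=\{X_1,\dots,X_{n_1}\}$ only, so once we condition on $\mathbf{X}_1$ it is a fixed measurable map $\Omega\to\mathbb{R}$. Conditionally on $\mathbf{X}_1$, the calibration functions $X_{n_1+1},\dots,X_n$ and the future function $X_{n+1}$ are i.i.d.\ draws from $P$ that are independent of $\mathbf{X}_1$; hence the real numbers $\sigma_1,\dots,\sigma_{n_2}$ together with $\sigma_{n_2+1}:=g(X_{n+1})$ are i.i.d., and in particular exchangeable. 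Everything will follow from the exchangeability of these $n_2+1$ scores.

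First I would rewrite the event of interest. Writing $k:=\lceil(n_2+1)\alpha\rceil$, Step 4 of Algorithm 1 gives $\mathcal{C}_n=\{x:g(x)\ge\sigma_{(k-1)}\}$, so $X_{n+1}\in\mathcal{C}_n$ if and only if $g(X_{n+1})\ge\sigma_{(k-1)}$. (When $k=1$ we set $\sigma_{(0)}:=-\infty$, so $\mathcal{C}_n=\Omega$ and coverage is trivial; otherwise $1\le k-1\le n_2$ since $\alpha<1$, so the order statistic is well defined.) It therefore suffices to bound the failure probability $\mathbb{P}\!\left(g(X_{n+1})<\sigma_{(k-1)}\right)$.

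Next I would carry out the counting argument conditionally on $\mathbf{X}_1$. If $g(X_{n+1})<\sigma_{(k-1)}$, then $g(X_{n+1})$ lies strictly below each of the $n_2-k+2$ calibration scores $\sigma_{(k-1)}\le\cdots\le\sigma_{(n_2)}$, so at most $k-2$ of the $n_2$ calibration scores are $\le g(X_{n+1})$. Counting $\sigma_{n_2+1}$ itself, the rank $R:=\#\{i\in\{1,\dots,n_2+1\}:\sigma_i\le\sigma_{n_2+1}\}$ obeys $R\le k-1$ on this event. By exchangeability, conditioning further on the pooled multiset $\{\sigma_1,\dots,\sigma_{n_2+1}\}$ makes $\sigma_{n_2+1}$ equally likely to occupy any of the $n_2+1$ order positions, and that position is a lower bound for $R$; hence $\mathbb{P}(R\le k-1\mid\mathbf{X}_1)\le (k-1)/(n_2+1)$ (in the tie-free case $R$ is exactly uniform, and ties only enlarge $R$). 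Consequently $\mathbb{P}\!\left(g(X_{n+1})<\sigma_{(k-1)}\mid\mathbf{X}_1\right)\le (k-1)/(n_2+1)$.

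Finally I would invoke the ceiling inequality $k-1=\lceil(n_2+1)\alpha\rceil-1<(n_2+1)\alpha$, which shows the conditional failure probability is strictly below $\alpha$, uniformly in $\mathbf{X}_1$. Averaging over $\mathbf{X}_1$ then yields $\mathbb{P}(X_{n+1}\in\mathcal{C}_n)\ge 1-(k-1)/(n_2+1)>1-\alpha$ for every $P$ and every $n$, with no regularity assumptions. I expect the only delicate point to be the bookkeeping of the rank when the conformity scores can tie (and the boundary index $k=1$); the rest is an immediate consequence of the exchangeability that the split in Algorithm 1 guarantees, which is exactly what lets the inductive predictor retain the finite-sample guarantee while being cheap to compute.
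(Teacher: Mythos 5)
Your proof is correct and takes essentially the same route as the paper's: after conditioning on the first half of the split, $g(X_{n+1})$ is exchangeable with the calibration scores, so its rank among the $n_2+1$ scores is (stochastically dominated by) uniform on $\{1,\dots,n_2+1\}$, and the threshold index $\lceil (n_2+1)\alpha\rceil-1$ then yields coverage at least $1-\alpha$. The paper states this in three sentences; your extra bookkeeping on ties, the boundary case $k=1$, and the ceiling inequality simply fills in details the paper leaves implicit.
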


\begin{proof}
  Note that the
  random function $g(\cdot)$ is independent of
  $\mathbf X_2$.
  Assume $X_{n+1}$ is another random sample.
By exchangeability, the rank of $\sigma_{n_2+1}:=g(X_{n+1})$
   among
  $\{\sigma_{1},...,\sigma_{n_2+1}\}$ is
  uniform among $\{1,2,...,n_2+1\}$.
  Therefore with probability at least $1-\alpha$, $X_{n+1}$ falls
  in $\mathcal C_n$.
\end{proof}

In the rest of this paper, unless otherwise noted, we use $n_1=\lfloor n/2\rfloor$.
The data splitting step might seem inefficient due to the sample splitting.
However, it greatly reduces the computational burden
of the standard conformal method by avoiding re-fitting
the function $g$ with every augmented data ${\sf aug}(x)$
for all $x\in \Omega$.  Moreover, such a reduction
can also substantially improve the robustness of the resulting prediction sets.
For example, consider
$k$-means clustering in $d$-dimensional
Euclidean space.  Let $\mathbf Z=(Z_1,...,Z_n) \subset \real^d$ be a data
set.
Given $k$ centers
$v_1,\ldots, v_k\subset \real^d$,
let
$$
R(v_1,...,v_k) = \frac{1}{n}\sum_{i=1}^n \min_j \|v_j-Z_i\|_2^2.
$$
The $k$-means prototypes are the functions
$\hat v_1,\ldots, \hat v_k$ that minimize $R$.
As usual, we can partition the data into $k$
groups $\hat G_1,\ldots, \hat G_k$
where $\hat G_j = \{ Z_i:\|Z_i-\hat v_j\|\leq \|Z_i- \hat v_r\|, r\neq j\}$.

For the standard conformal method,
let
$\hat v_1(z),\ldots, \hat v_k(z)$
denote the prototypes based on the augmented data
$Z_1,\ldots,Z_n,z$.
Define the conformity score
$\sigma_i = -\min_j \|Z_i-\hat v_j(z)\|$.
Let ${\cal C}_n$ denote the resulting conformal set.
With obvious modification,
let
${\cal C}_n'$ denote the conformal set
based on the modified method.
Define ${\sf diam}(\mathcal{C}) = \sup_{x,y\in \mathcal C} \|x-y\|$.

\begin{proposition}\label{thm:infinite_size}
For any $\mathbf Z$,
${\sf diam}({\cal C}_n)=\infty$ but
${\sf diam}({\cal C}_n')<\infty$ if
$n > 2 / \alpha$.
\end{proposition}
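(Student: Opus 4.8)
The plan is to prove the two halves by separate mechanisms; I assume throughout the nontrivial clustering regime $k\ge 2$ (for $k=1$ the single prototype is the global mean and both sets are bounded).

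\emph{Unboundedness of ${\cal C}_n$.} The key point is that a test point $x$ placed far from the data captures one of the refit prototypes as its own. Fix $\mathbf Z$, set $M=\max_{i\le n}\|Z_i\|$, and let $D_{k-1}$ denote the optimal unnormalized $(k-1)$-means cost $\min\sum_{i=1}^n\min_{1\le j\le k-1}\|Z_i-w_j\|^2$ of $\mathbf Z$ alone, a constant independent of $x$. I first compare two ways of clustering the augmented data $\{Z_1,\dots,Z_n,x\}$. Placing $k-1$ prototypes optimally on $\mathbf Z$ and one prototype exactly at $x$ has total cost at most $D_{k-1}$, independent of $\|x\|$. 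Conversely, in any clustering where the prototype nearest $x$ also serves at least one data point, that prototype is the centroid $c$ of a cluster containing $x$ together with $m\ge 1$ data points, and a direct computation gives $\|x-c\|\ge\frac{m}{m+1}(\|x\|-M)\ge\tfrac12(\|x\|-M)$; hence the squared cost charged to $x$ alone is at least $\tfrac14(\|x\|-M)^2$. This exceeds $D_{k-1}$ once $\|x\|>M+2\sqrt{D_{k-1}}$, so for such $x$ every optimal clustering makes $x$ a singleton whose centroid is $x$ itself. Hence $\min_j\|x-\hat v_j(x)\|=0$ and $\sigma_{n+1}=0$.

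Because every other score satisfies $\sigma_i=-\min_j\|Z_i-\hat v_j(x)\|\le 0=\sigma_{n+1}$, we get $\sum_{i=1}^{n+1}\ind(\sigma_i\le\sigma_{n+1})=n+1$, so $\pi(x)=1\ge\alpha$ and $x\in{\cal C}_n$. Thus ${\cal C}_n$ contains the whole exterior $\{x:\|x\|>M+2\sqrt{D_{k-1}}\}$, which already has infinite diameter, giving ${\sf diam}({\cal C}_n)=\infty$.

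\emph{Boundedness of ${\cal C}_n'$.} In the inductive method the prototypes $\hat v_1,\dots,\hat v_k$ are computed from $\mathbf X_1$ only and do not depend on $x$, so $g(x)=-\min_j\|x-\hat v_j\|$ and ${\cal C}_n'=\{x:\min_j\|x-\hat v_j\|\le-\lambda\}=\bigcup_{j=1}^k\{x:\|x-\hat v_j\|\le-\lambda\}$ is a union of $k$ balls about fixed centers. It only remains to verify that the radius $-\lambda$ is finite, i.e.\ that $\lambda$ is a genuine order statistic and not $-\infty$. This is precisely where $n>2/\alpha$ is used: with $n_1=\lfloor n/2\rfloor$ we have $n_2=\lceil n/2\rceil\ge n/2>1/\alpha$, hence $(n_2+1)\alpha>1+\alpha>1$ and the index $\lceil(n_2+1)\alpha\rceil-1\ge 1$. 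Thus $\lambda$ equals one of the finite values $\sigma_i=-\min_j\|Z_{n_1+i}-\hat v_j\|\in(-\infty,0]$, and ${\cal C}_n'$ is a finite union of balls of finite radius, with ${\sf diam}({\cal C}_n')\le\max_{j,l}\|\hat v_j-\hat v_l\|+2(-\lambda)<\infty$.

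The main obstacle is the cost comparison in the first part: one must rule out, for large $\|x\|$, every optimal clustering in which $x$ shares its prototype with the data, and the decisive estimate is that such a shared centroid stays a fixed fraction of $\|x\|$ away from $x$ because the competing data points are bounded. Once that is in hand the remaining steps---converting $\sigma_{n+1}=0$ into $\pi(x)=1$, and checking that the order-statistic index is positive---are routine, with the hypothesis $n>2/\alpha$ entering only to keep that index at least one.
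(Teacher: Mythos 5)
Your proof is correct and follows essentially the same route as the paper's: a far-away point $x$ is forced to become a singleton cluster in the augmented $k$-means solution, giving $\sigma_{n+1}=0$ and $\pi(x)=1$, while in the modified (inductive) method the prototypes are fixed by $\mathbf{Z}_1$, so ${\cal C}_n'$ is a union of $k$ balls whose common radius $-\lambda$ is finite precisely because $n>2/\alpha$ makes the order-statistic index $\lceil (n_2+1)\alpha\rceil-1$ at least $1$. The only difference is that you supply the quantitative details the paper merely asserts --- the centroid-displacement cost comparison proving the singleton-cluster claim, and the explicit role of $n>2/\alpha$ --- along with the sensible caveat that the first claim requires $k\ge 2$.
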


\begin{proof} For the first statement,
consider the augmented data
$Z_1,\ldots,Z_n,z$, where
$\|z\|\ge C$.
For $C$ sufficiently large (may depend on $\mathbf Z$),
there exists a group $G_j$ such that
$G_j = \{z\}$.
In this case, $\sigma_{n+1}=0$ and hence
${\cal C}_n\supseteq\{z:\|z\|\ge C\}$ for all $\alpha$.
It follows that ${\sf diam}({\cal C}_n)=\infty$.
The second statement follows since
all the prototypes are in
${\sf convhull}(\mathbf Z_1)\subseteq
{\sf convhull}(\mathbf Z)$. For all $\alpha$,
there exists a $C>0$ large enough such that
$g(z)<\sigma_{1}$
for all
$\|z\|\ge C$. The key is that in the modified method,
the prototypes and the majority of $\sigma_i$'s are not affected
by absurd values of $z$.
\end{proof}

\begin{remark}
Bounded prediction sets could also be obtained by
omitting $Y_i$ from $g$
when defining $\sigma_i$.
This is, in fact, the method used in
\cite{VovkGS08b}. But this ``omit one'' approach is much more computationally expensive
than our data-splitting method.
\end{remark}

\subsection{Conformal Prediction Bands in the Functional Case}
We conclude this section with a brief discussion on
conformal prediction bands for functional data.
When the $X_i$'s are functions, we can construct prediction
bands as follows.
Given ${\cal C}_n$, a conformal prediction set,
we can define upper and lower bounds for all $t\in [0,1]$:
$\ell(t) = \inf_{x\in {\cal C}_n} x(t)$
and $u(t) = \sup_{x\in {\cal C}_n} x(t).$
It follows that
$$\mathbb{P}[ \ell(t) \leq X_{n+1}(t)
\leq u(t),~\forall~t]\geq 1-\alpha.$$
However, these bounds may be hard to compute, depending on
the choice of conformity score.

The choice of conformity score also affects statistical efficiency.
To see this, consider the following example.
Under mild conditions on the process,
the random variable $Y = \sup_t |X(t)|$
has finite quantiles.
Then we can construct conformal prediction
set $\mathcal C_n$ using the standard approach
with conformity score $g(X)=-\sup_{t}|X(t)|$.
Thus $\mathcal C_n$ is a valid prediction set and naturally leads to a band.
However, such a band is usually too wide and hence of
limited use.

We therefore face two challenges:
choosing a good conformity score and
extracting useful information from ${\cal C}_n$.
In the vector setting,
\cite{LeiRW11} showed that
using a density estimator to define a conformity score
leads to conformal set ${\cal C}_n$ with certain
minimax optimality properties.
However, in the present setting, densities do not even exist.
Instead, we consider two approaches:
the first uses density of coefficients of projections to construct prediction bands
 and the second uses
pseudo-densities to build clustering trees.

There is also a third challenge: identifying an optimal conformity score.
This was done using minimax theory in
\cite{LeiRW11,LeiRW12}
in the case where the data are vectors.
To our knowledge, choosing an optimal conformity score
for the functional case is still an open question.
Rather, our current focus is on computation and visualization.

\section{Prediction Bands Based on Projections}\label{sec:extend}

A common approach to functional data analysis is to project the curves
onto a finite dimensional space that captures the main
features of the curves.
In this section, we consider the projection approach because it
enables us to characterize the prediction sets and to
construct simultaneous prediction bands with finite sample guarantee.
To be concrete,
we require the prediction band ${\cal C}_n$ to satisfy
$\mathbb{P}(\Pi(X_{n+1})\in {\cal C}_n)\geq 1-\alpha$
where $\Pi$ is a mapping into a $p$-dimensional function space
${\Omega}_p\subset {\Omega}$.
There are two types of projections:
projections on a fixed basis (for example, Fourier basis, wavelet basis,
and spline basis) and
projections on a data-driven basis such as functional principal components (FPC).
Our method, summarized in Algorithm 2, is general enough so
that it can be used for any basis.
It is
a specific implementation of the general method given
 in Algorithm 1.

 \begin{center}
 \fbox{\parbox{4.5in}{
 \begin{center}{\sf Algorithm 2:
 Functional Conformal Prediction Bands}\end{center}
  \textbf{Input:} Data $X_1,...,X_n$, basis functions $(\phi_1,...\phi_p)$,
  conformity score $\hat f$, level $\alpha$, $1\le n_1<n$.\\
  \textbf{Output:} $B_n(t)\subseteq\real^1$ for all $t\in[0,1]$.
 \begin{enumerate}
 \item Split data randomly into two parts $\mathbf
     X_1=\{X_{1},...X_{n_1}\}$, and
 $\mathbf X_2=\{X_{n_1+1},...,X_{n}\}$. Let
 $n_2=n-n_1$.
 \item 
 Compute basis projection coefficients
     $\xi_{ij}=\inner{X_i}{\phi_j}$, for $i=1,...,n$,
 $j=1,...,p$.  Denote $\xi_i=(\xi_{i1},...,\xi_{ip})$.
 \item For $i=n_1+1,...,n$, evaluate $\hat f(\xi_{i})
 =\hat f(\xi_i;\xi_1,...,\xi_{n_1})$ and rank these
     numbers
 by $f_1\le f_2\le...\le f_{n_2}$.
 \item Define $T_n=\{\xi\in \real^p: \hat f(\xi)\ge\lambda= f_{\lceil
     (n_2+1)\alpha\rceil-1}\}$.
 \item $B_n(t)=\{\sum_{j=1}^p \zeta_j\phi_j(t): (\zeta_1,...,\zeta_p)\in
     T_n\}$.
 \end{enumerate}
 }}
 \end{center}

\begin{proposition}
  Denote $\Pi_p$ the projection operator induced by
  eigen-functions $\{\phi_j:1\le j\le p\}$.  Then the
  prediction band $\mathcal B_n$ given by Algorithm 2 satisfies
  $$\pr\big\{\left(\Pi_p X_{n+1}\right)(t)\in B_n(t),~\forall~t\in[0,1]
  \big\}\ge 1-\alpha.$$
\end{proposition}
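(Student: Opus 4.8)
The plan is to reduce this statement to Lemma \ref{lem:gen_conformal} by recognizing Algorithm 2 as an instance of Algorithm 1 run on the projection coefficients rather than on the raw curves. The key observation is a deterministic set inclusion: if the coefficient vector $\xi_{n+1} = (\inner{X_{n+1}}{\phi_1}, \ldots, \inner{X_{n+1}}{\phi_p})$ lands in $T_n$, then the band event holds automatically. Indeed, $(\Pi_p X_{n+1})(t) = \sum_{j=1}^p \xi_{n+1,j}\phi_j(t)$, and by the definition of $B_n(t)$ in step 5, every point $\sum_j \zeta_j \phi_j(t)$ with $(\zeta_1,\ldots,\zeta_p)\in T_n$ belongs to $B_n(t)$. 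Taking $\zeta = \xi_{n+1}$ shows
$$\{\xi_{n+1}\in T_n\}\ \subseteq\ \{(\Pi_p X_{n+1})(t)\in B_n(t),\ \forall\, t\in[0,1]\},$$
so it suffices to prove $\pr(\xi_{n+1}\in T_n)\ge 1-\alpha$.

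First I would set up the exchangeability argument carefully. The conformity score $\hat f(\cdot) = \hat f(\cdot;\xi_1,\ldots,\xi_{n_1})$ is built only from the first half $\mathbf X_1$, and the basis $\{\phi_j\}$ is fixed (or, in the FPC case, estimated from $\mathbf X_1$ as well). Conditioning on $\mathbf X_1$ therefore freezes both $\hat f$ and the basis, after which the coefficient vectors $\xi_{n_1+1},\ldots,\xi_n,\xi_{n+1}$ are i.i.d.\ draws, since they are fixed measurable transformations of the i.i.d.\ curves $X_{n_1+1},\ldots,X_{n+1}$. Writing $\sigma_i = \hat f(\xi_{n_1+i})$ for $i=1,\ldots,n_2$ and $\sigma_{n_2+1}=\hat f(\xi_{n+1})$, the exchangeability of these $n_2+1$ scores makes the rank of $\sigma_{n_2+1}$ uniform on $\{1,\ldots,n_2+1\}$.

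Next I would translate the threshold in step 4 into a statement about ranks. Since $\lambda = f_{\lceil(n_2+1)\alpha\rceil-1}$ is the $(\lceil(n_2+1)\alpha\rceil-1)$-th order statistic of the second-half scores, the event $\{\xi_{n+1}\in T_n\}=\{\hat f(\xi_{n+1})\ge \lambda\}$ fails only when $\sigma_{n_2+1}$ is among the smallest $\lceil(n_2+1)\alpha\rceil-1$ of the combined scores. By the uniform-rank property this has probability at most $(\lceil(n_2+1)\alpha\rceil-1)/(n_2+1)\le \alpha$, which is exactly the content of Lemma \ref{lem:gen_conformal}. Combining with the inclusion above gives the claim.

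The main obstacle I anticipate is bookkeeping rather than conceptual: one must verify that $\hat f$ (and the data-driven basis, if FPC is used) genuinely depends only on $\mathbf X_1$, so that conditioning on $\mathbf X_1$ restores exchangeability of the remaining coefficients. If instead the basis were estimated from the full sample, the coefficients of the second half and of $X_{n+1}$ would no longer be exchangeable and the coverage guarantee could fail; so the cleanest route is to state this dependence assumption explicitly and otherwise lean directly on Lemma \ref{lem:gen_conformal}.
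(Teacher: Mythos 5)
Your proposal is correct and follows essentially the same route as the paper, whose entire proof is the remark that the result ``follows from that of Lemma \ref{lem:gen_conformal}'': you have simply made explicit the two ingredients the paper leaves implicit, namely the deterministic inclusion $\{\xi_{n+1}\in T_n\}\subseteq\{(\Pi_p X_{n+1})(t)\in B_n(t),\ \forall\,t\}$ and the conditional exchangeability of the second-half scores given $\mathbf X_1$. Your closing caveat about the basis being estimated only from $\mathbf X_1$ matches the paper's own remark immediately following the proposition.
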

The proof follows from that of Lemma \ref{lem:gen_conformal}.

\begin{remark} The above finite sample coverage guarantee
remains valid if the basis functions are estimated from
$\mathbf{X}_1$, which is the case in most applications
where functional principal
components are used.
\end{remark}

\subsection{Conformal Prediction Bands using
Gaussian Mixture Approximation}
In general one can apply Algorithm 2 with any
basis (possibly a data-driven
one obtained from $\mathbf X_1$).
However, in its abstract form, Algorithm 2 leaves some implementation
issues unsolved.
The most challenging one is the characterization of $T_n$
and $B_n(t)$ which depend on the choice of
conformity score.
Usually $\hat f$ is a density estimator of the
projection coefficient vector $\xi\in\real^p$
since it makes sense to use high density region as a prediction set
for future observations.  For example, \cite{LeiRW11}
use kernel density estimator to construct $\hat f$.
Although kernel density estimators can approximate any
smooth densities,
it is unclear how to keep track
of $\{\xi^T\phi(t): \xi\in T_n\}$, the linear projection of the resulting prediction sets, where
$\phi(t)=(\phi_1(t),...,\phi_p(t))^T$.
However, the inductive conformal method allows us to use
any other conformity score to simplify the computation.
Next we show how to use Gaussian mixture density estimator
to construct such a band.

To motivate the method, consider
the mixture model
$$
X\sim P= \pi_1 P_1+\pi_2 P_2+\cdots+\pi_K P_K,
$$
where each $P_k$ is the distribution of a
Gaussian process on $[0,1]$
and $\pi_k$'s are mixing probabilities satisfying $\sum_k\pi_k=1$
and $\pi_k\ge 0$.
Let $\{\phi_j:j\ge 1\}$ be an orthonormal basis of $L_2[0,1]$, and
$\xi_{j}=\inner{X}{\phi_j}$ be the scores of $X$, where
$\inner{f}{g}=\int fg$.  Then
$(\xi_1,...,\xi_p)$ is distributed as a $p$-dimensional
Gaussian mixture.
For smooth processes, the variance of the projection score on
dimension $j$ in
each component decays quickly as $j$ grows.
Therefore, it is common in functional data analysis the
sequence of scores is truncated
for some small $p$.  The basis that gives fastest decay
corresponds to
the functional principal components,
where $\phi_j$'s are the eigen-functions
of $\Gamma$, the covariance function of $X$:
\begin{center}
$
\Gamma=\E [X\otimes X] - \E X \otimes \E X
=\sum_{j\ge 1}\lambda_j \, [\phi_j\otimes \phi_j],
$
\end{center}
where for functions $f,g\in L_2[0,1]$, we define
$f\otimes g:[0,1]^2\mapsto \mathbb{R}^1$:
$[f\otimes g](s,t)=f(s)g(t).$
We emphasize that our band has correct coverage: (1)
without assuming that the mixture model is correct, and (2)
for all choices of projection dimension and numbers of mixture
components.

Going back to Algorithm 2, we can choose $\hat f$ to be
a Gaussian mixture density estimator with $K$ components and
$\phi_1,...,\phi_p$ the first $p$ eigen-functions of
empirical covariance function obtained from $\mathbf X_1$.
Let $\hat\pi_k,~\hat\mu_k,~\hat\Sigma_k$ be
the estimated mixture proportion, mean, and covariance of the $k$th
component.  Denote $\varphi(\cdot;\mu,\Sigma)$ the density function of
${\rm Norm}(\mu,\Sigma)$.
A rough outer bound of $T_n$, the level set of $\hat f$ at $\lambda$,
 can be obtained by
\begin{equation}
  \label{eq:coarse_approx}
T_n = \{\xi: \hat f(\xi)\ge \lambda\}\subseteq \bigcup_{k=1}^K
\{\xi: \varphi(\xi;\hat \mu_k,\hat\Sigma_k) \ge \lambda/(K\hat\pi_k)\}:=
\bigcup_{k=1}^K T_{n,k}.
\end{equation}
Note that each $T_{n,k}$ is an ellipsoid in $\mathbb{R}^p$ whose
projection
on $\phi(t)$ can be computed easily.
Let $u_k(t)=\sup_{\xi\in T_{n,k}} \xi^T\phi(t)$ and
$\ell_k(t)=\inf_{\xi\in T_{n,k}} \xi^T\phi(t)$.  Both
$u_k$ and $\ell_k$ are available in close form.  Then
we can output $ B_n(t)=\cup_k [\ell_k(t), u_k(t)]$.

\subsection{Two refinements}
The approximation in (\ref{eq:coarse_approx}) is usually conservative
 and improvements are usually possible.  We elaborate this idea in two
 cases.

\paragraph{A better approximation of the density level set.}
First, when the components in the Gaussian mixture are well
separated ($\hat\mu_k$'s
are far from each other), then intuitively
the level set of $\hat f$ at $\lambda$ shall be approximately
the union of level sets of $\varphi(\cdot;\hat\mu_k,\hat\Sigma_k)$
at $\lambda/\pi_k$. Now we make this idea precise and
give an more refined approximation of $T_n$ that also retains
finite sample coverage guarantee.

For all $1\le k,s\le K$, define
$$
\delta_{ks}=\sup_{\xi}\left(\hat\pi_k
\varphi(\xi;\hat\mu_k,\hat\Sigma_k)
\wedge\hat\pi_s\varphi(\xi;\hat\mu_s,\hat\Sigma_s)\right),\ \ \
\delta_k=\sum_{s\neq k} \delta_{ks}.
$$
Roughly speaking, $\delta_{ks}$ measures how much
the two components overlap and $\delta_k$ measures how much
the $k$th component overlaps with all other components.
We note that $\delta_{ks}$ can be computed by
solving a sequence of simple quadratically constrained quadratic
programming (QCQP).
More concretely, for a given value of $c$, we
find $\sup_\xi \varphi(\xi;\hat\mu_k,\hat\Sigma_k)$
under the constraint $\varphi(\xi;\hat\mu_s,\hat\Sigma_s)\ge c$,
which is a simple QCQP. Denote this value
by $\eta(c)$, then it can be shown that
$\delta_{ks}$ is either trivial
($\hat\pi_s\varphi(\mu;\hat\mu_s,\hat\Sigma_s)\le \hat\pi_k\varphi(\mu;\hat\mu_k,\hat\Sigma_k)$)
or it can be given by the unique $c^*$ such that $\hat\pi_k c^*=
\hat\pi_s\eta(c^*)$. In the non-trivial case,
$c^*$ can be found using a simple binary search.

When all $\delta_{k}$ are smaller than $\lambda$, we have a
better approximation of $T_n$:
\begin{equation}\label{eq:well_sep_approx}
T_n = \{\xi: \hat f(\xi)\ge \lambda\}\subseteq \bigcup_{k=1}^K
   \{\xi: \varphi(\xi;\hat \mu_k,\hat\Sigma_k) \ge
   (\lambda-\delta_k)/\pi_k\}:=
   \bigcup_{k=1}^K \tilde T_{n,k}.
\end{equation}

Note that $\tilde T_{n,k}$ is also an ellipsoid. We can similarly compute
$\tilde \ell_k=\inf_{\xi\in\tilde T_{n,k}} \xi^T\phi(t)$,
$\tilde u_k=\sup_{\xi\in\tilde T_{n,k}} \xi^T\phi(t)$,
and the band
$\tilde B_n(t)=\cup_{k}[\tilde \ell_k, \tilde u_k]$.
\begin{proposition}
The prediction band $\tilde B_n(t)$ constructed above
is a union of $K$ bands, and
$$
\pr\left\{\exists\,k:\,\left[\Pi_p(X_{n+1})\right](t)\in
[\tilde\ell_k(t),\tilde u_k(t)],\,\forall\,t\right\}\ge 1-\alpha.
$$
\end{proposition}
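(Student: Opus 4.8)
The claim splits into a deterministic part and a probabilistic part, and the plan is to handle them separately. The statement that $\tilde B_n$ is a union of $K$ bands is immediate from the construction: each $\tilde T_{n,k}$ is an ellipsoid in $\real^p$, so $t\mapsto[\tilde\ell_k(t),\tilde u_k(t)]$ is a genuine band and $\tilde B_n(t)=\bigcup_k[\tilde\ell_k(t),\tilde u_k(t)]$ is their union. For the coverage bound, the key observation is that the event in the proposition contains the event $\{\xi_{n+1}\in T_n\}$, where $\xi_{n+1}$ is the projection-coefficient vector of $X_{n+1}$. Since Lemma \ref{lem:gen_conformal}, applied with the Algorithm 2 conformity score $\xi\mapsto\hat f(\xi)$, gives $\pr(\xi_{n+1}\in T_n)\ge1-\alpha$, the conclusion then follows by monotonicity of probability. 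So the whole proof reduces to verifying a set inclusion together with a routine projection fact.

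The heart of the matter is the containment $T_n\subseteq\bigcup_k\tilde T_{n,k}$ of (\ref{eq:well_sep_approx}), which I would verify by an \emph{argmax} argument. Write $\varphi_k(\xi)=\varphi(\xi;\hat\mu_k,\hat\Sigma_k)$ for brevity and fix $\xi\in T_n$, so $\sum_k\hat\pi_k\varphi_k(\xi)\ge\lambda$. Let $k^\ast\in\argmax_k\hat\pi_k\varphi_k(\xi)$. For every $s\neq k^\ast$ we have $\hat\pi_s\varphi_s(\xi)\le\hat\pi_{k^\ast}\varphi_{k^\ast}(\xi)$, so the minimum of the pair equals $\hat\pi_s\varphi_s(\xi)$, whence $\hat\pi_s\varphi_s(\xi)\le\delta_{k^\ast s}$ by the definition of $\delta_{k^\ast s}$ as a supremum over $\xi$. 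Summing over $s\neq k^\ast$ bounds the contribution of all other components by $\delta_{k^\ast}$, and therefore
$$
\lambda\le\sum_k\hat\pi_k\varphi_k(\xi)\le\hat\pi_{k^\ast}\varphi_{k^\ast}(\xi)+\delta_{k^\ast},
$$
i.e. $\varphi_{k^\ast}(\xi)\ge(\lambda-\delta_{k^\ast})/\hat\pi_{k^\ast}$, which is exactly the condition $\xi\in\tilde T_{n,k^\ast}$. This places every $\xi\in T_n$ in some $\tilde T_{n,k}$, giving the inclusion.

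It then remains to transfer the inclusion from coefficient space to the band, which is routine. If $\xi_{n+1}\in\tilde T_{n,k}$ for some $k$, then by the definitions $\tilde\ell_k(t)=\inf_{\xi\in\tilde T_{n,k}}\xi^T\phi(t)$ and $\tilde u_k(t)=\sup_{\xi\in\tilde T_{n,k}}\xi^T\phi(t)$ we get $\tilde\ell_k(t)\le\xi_{n+1}^T\phi(t)\le\tilde u_k(t)$ for every $t$, and since $[\Pi_p(X_{n+1})](t)=\xi_{n+1}^T\phi(t)$ this is precisely the event in the proposition. Hence $\{\xi_{n+1}\in T_n\}$ implies $\{\exists k:[\Pi_p(X_{n+1})](t)\in[\tilde\ell_k(t),\tilde u_k(t)]\ \forall t\}$, and combining with Lemma \ref{lem:gen_conformal} closes the argument.

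I expect the only real obstacle to be the set inclusion of the second paragraph; everything else is bookkeeping. Two minor points deserve care. First, $\lambda$ is random (it is read off from $\mathbf X_2$), but the inclusion $T_n\subseteq\bigcup_k\tilde T_{n,k}$ and the band extraction are purely deterministic given $(\mathbf X_1,\mathbf X_2)$, so the event inclusion holds pathwise and a single appeal to Lemma \ref{lem:gen_conformal} suffices. Second, the hypothesis $\delta_k<\lambda$ is what keeps each $\tilde T_{n,k}$ a bounded ellipsoid, hence the band informative; the argmax argument in fact delivers $\varphi_{k^\ast}(\xi)\ge(\lambda-\delta_{k^\ast})/\hat\pi_{k^\ast}$ with no condition, so coverage is never endangered even when the approximation degrades.
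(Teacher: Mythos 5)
Your proof is correct and follows essentially the same route the paper takes (implicitly, since the paper states this proposition without a written proof): conformal coverage of $T_n$ from Lemma \ref{lem:gen_conformal} applied to Algorithm 2's score, the inclusion (\ref{eq:well_sep_approx}), and the deterministic projection onto bands. Your argmax argument additionally supplies a proof of the inclusion (\ref{eq:well_sep_approx}), which the paper only asserts, and your observation that this inclusion---and hence coverage---holds even without the condition $\delta_k<\lambda$ (that condition only guarantees the $\tilde T_{n,k}$ are bounded ellipsoids) is a correct refinement.
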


We implement this method on a real data example.
The data set consists of 1,000 recordings of neurons over time (see Figure
\ref{fig:data_plots} (a)).
The data
 come from a behavioral experiment, performed at the Andrew Schwartz
 motorlab\footnote{\url{http://motorlab.neurobio.pitt.edu/index.php}}:
 A macaque monkey performs a center-out and out-center
 target reaching task with 26 targets in a virtual 3D environment.
 The curves show voltage of neurons versus times recorded at
 electrodes.  The recorded neural activity consists of all action
 potentials detected above a channel-specific threshold on a
 96-channel Utah array implanted in the primary motor cortex.
 One of the goals is ``spike sorting''
 which means clustering the curves.
 Each cluster is thought to correspond to one neuron
 since each neuron tends to have a characteristic curve (spike).

\begin{figure}
  \includegraphics[scale = 0.45]{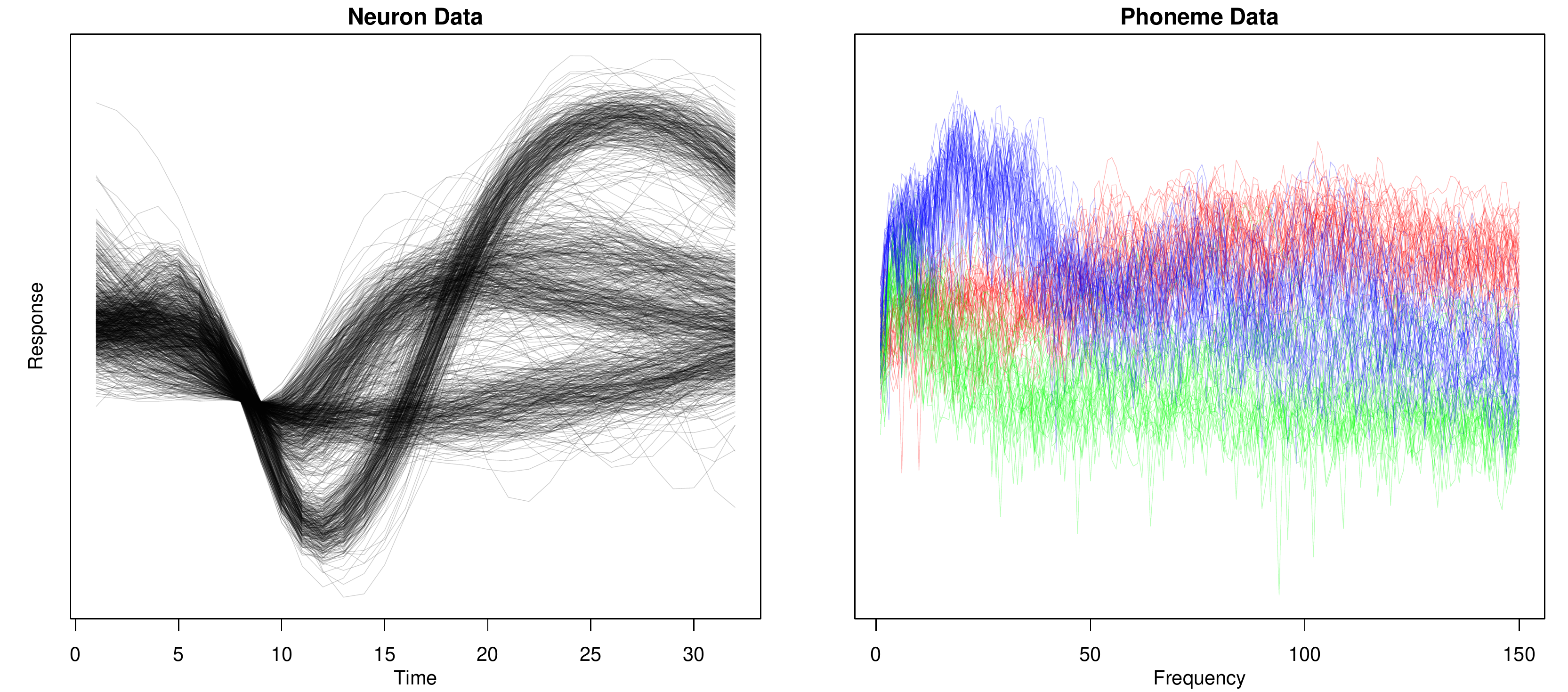}
\caption{Two functional data sets. (a): neuron data.
      (b): phoneme data.}
\label{fig:data_plots}       
\end{figure}

Our implementation uses functional principal components with $p=2$ and $K=3$.
A band consists of three components is plotted along with
the projected sample curves.  The empirical coverage is
916 out of 1000.
See Figure \ref{fig:neuro_bands}.

 \begin{figure}
   \includegraphics[scale = 0.6]{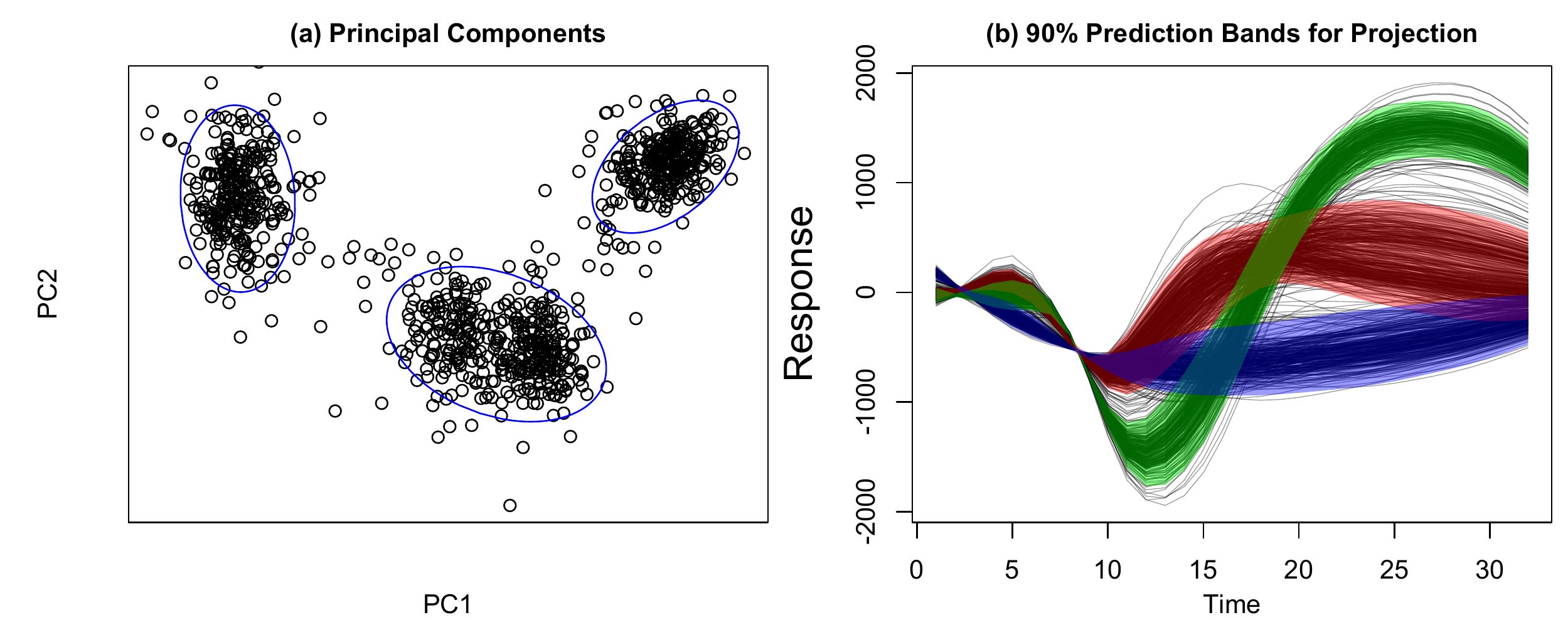}
     \caption{Prediction Bands
     for Neuron Data. (a): plots of first two FPC coefficients
   with boundary of $\tilde T_{n,k}$.
      (b): conformal prediction bands and projected curves.}
     \label{fig:neuro_bands}
 \end{figure}

\paragraph{A different conformity score.} The approximation
introduced above may still be too conservative when the mixture components
are not well separated.  An example of such a situation can be seen
from the phoneme
 data, which is
 considered in \cite{HTF} and \cite{ferraty2006nonparametric}.
 The data considered here
 consists of three phonemes, where each phoneme has
 400 sample curves of discretized periodograms
 of length 150.\footnote{Further information and the data set can be found
 at (\url{http://www.math.univ-toulouse.fr/staph/npfda/npfda-phoneme-des.pdf}).} The data is plotted
 in Figure \ref{fig:data_plots} (b).

When analyzing the  phoneme data, we
also perform a FPCA and focus on the first two principal component scores.
Note that our method can be implemented using other variants such as robust
FPCA and using more dimensions.  Here we choose two principal components
for the ease of visualization.  In the original data, the label of each curve
is known and we summarize this information in
Figure \ref{fig:phone_bands} (a) (but our algorithm assumes that
the labels are unknown).  We observe that the cluster corresponding
to phoneme ``dcl'' exhibits
 significant non-Gaussianity, therefore
 the Gaussian mixture model fitting suggests that this single cluster is
 best modeled by a two-component Gaussian mixture.
 The band for that cluster
 is just the union of the bands given by the two components.

In this case, if the approximation in (\ref{eq:well_sep_approx}) is used,
 the resulting prediction and hence the band will be too wide
 than necessary due to the heavy overlap between the two
 components corresponding to phoneme ``dcl''.
 Here we use a different conformity score
 to obtain a better prediction set. In particular, consider
 \begin{equation}
  \label{eq:new_csf}
  \hat f(\xi)=\max_{k:1\le k\le K}\hat\pi_k
  \varphi(\xi;\hat\mu_k,\hat\Sigma_k).
\end{equation}
That is, instead of computing the sum of density from each component,
we only look at the density of the cluster that $\xi$ is most likely
to belong to.
Such a conformity score function resembles that of a
K-means clustering, except it takes into account of the
mixture probabilities.

The advantage of using such a max function is the decomposability:
\begin{equation}
  \label{eq:decomp_new_csf}
  T_n=\{\xi:\hat f(\xi)\ge \lambda\}=
  \bigcup_{k=1}^K \{\xi:\varphi(\xi;\hat\mu_{k},\hat{\Sigma_k})\ge
  \lambda /\hat\pi_k\}:=\bigcup_{k=1}^K\bar T_{n,k}.
\end{equation}
Since $\bar T_{n,k}$ is also an ellipsoid, we can analogously  define $\bar \ell_k(t)
=\inf_{\xi\in \bar T_{n,k}}\xi\phi(t)$, $\bar u_k(t)=\sup_{\xi\in \bar T_{n,k}}\xi^T\phi(t)$, and
$\bar B_n(t)=\cup_k[\bar \ell_k(t),\bar u_k(t)]$.

\begin{remark} We note that $\bar T_{n,k}$ defined in (\ref{eq:decomp_new_csf}) and $\tilde T_{n,k}$ defined in
(\ref{eq:well_sep_approx}) are very close if the components are
well-separated because $\delta_k\approx 0$ for all $k$ and $\hat f$ defined in (\ref{eq:new_csf})
is very close to the estimated density.\end{remark}

Since the union representation in the above equation is exact, there is no
loss of statistical efficiency.  The only possible loss of efficiency
is using a conformity score function other than the density function.  However,
such a loss is conceivably small: since if the max contributor is large,
the density is likely to be large.  Therefore, ranking the max component
density is roughly the same as ranking the density.
The prediction band for the phoneme data
 is plotted in Figure \ref{fig:phone_bands} (b) where the empirical coverage is
 90.5\%.

 \begin{figure}
   \includegraphics[scale = 0.39]{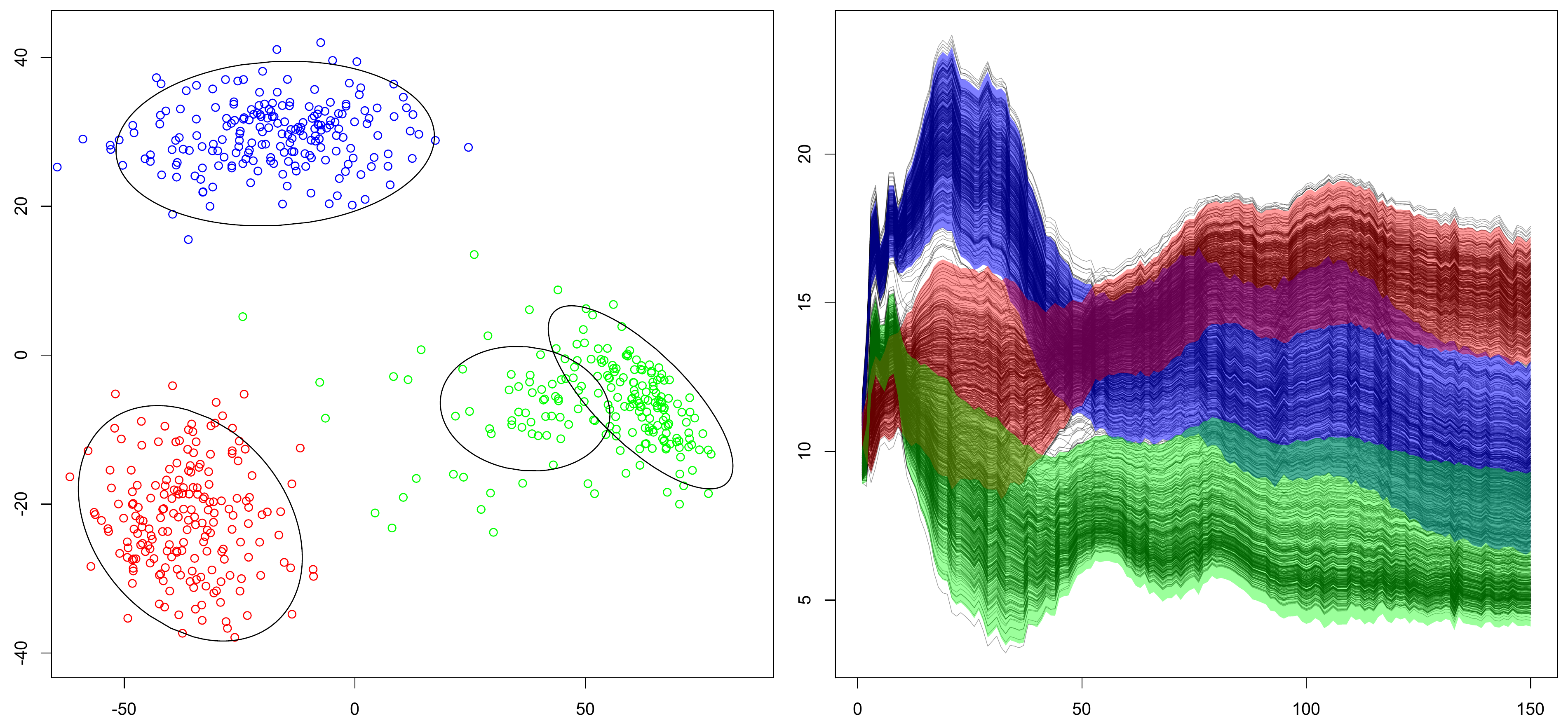}
     \caption{Prediction Bands
     for Phoneme Data. (a): plots of first two FPC coefficients
   with boundary of $\tilde T_{n,k}$.
      (b): conformal prediction bands and projected curves.}
     \label{fig:phone_bands}
 \end{figure}

\section{Methods Based on Pseudo-Densities}\label{sec:pseduo-density}

Here we investigate a different approach, based on pseudo-densities which were introduced by
\cite{ferraty2006nonparametric}. For simplicity, assume $n\alpha$ is an integer.
Given a kernel $K$ and bandwidth $h>0$
define the pseudo-density estimator
\begin{equation}
\hat p_h(u) = \frac{1}{n}\sum_{i=1}^n
K\left( \frac{ d(u,X_i)}{h}\right)
\end{equation}
where
$d(f,g)$ is some distance measure (for example, $d(f,g)=[\int (f-g)^2]^{1/2}$).
We assume that $K(z)\leq K(0)$ for all $z$.
This looks like a kernel density estimator
but it is not a density function.
Indeed, there are no density functions
on ${\Omega}$ because there is no $\sigma$-finite
dominating measure.
Nevertheless,
Ferraty and Vieu show that
$\hat p_h(u)$ can be used for various tasks
such as clustering curves, just as we would do
for a density estimator.
We can view $\hat p_h$ as an estimator of
$p_h(u) = \mathbb{E}(\hat p_h(u)) =
\int K\big( \frac{d(u,x)}{h}\big) dP(x).$

We follow a similar approach here,
and endow the high density sets in the functional space with a conformal
prediction interpretation.
For presentation simplicity, we use the standard conformal method.
Let
${\cal C}_{n,\alpha} = \{ f:\ \pi(f) \geq \alpha \}$
where
\begin{align*}
\pi(f) =&
\frac{1+ \sum_{i=1}^{n} \ind( \hat p_h^f (X_i) \leq \hat p_h^f (f) ) }{n+1},\\
\hat p_h^f(u) =&
\frac{n}{n+1}\hat p_h(u) + \frac{1}{n+1}
K\left(\frac{d(u,f)}{h}\right).
\end{align*}

It is straightforward to see that
$\mathcal C_{n,\alpha}$ constructed above is a level
$1-\alpha$ conformal prediction set, and hence has
distribution-free, finite sample coverage.
The set ${\cal C}_{n,\alpha}$ can be approximated by
a level set of $\hat p_h$ as follows.
Let
$X_{(1)}, X_{(2)},\ldots, $
denote the re-ordered data,
ranked so that
$\hat p_h(X_{(1)}) \leq \hat p_h(X_{(2)}) \leq \cdots$.
Let $\lambda = \hat p_h( X_{(n\alpha)})$ and let
\begin{equation}\label{eq:C+}
{\cal C}_{n,\alpha}^+ =
\left\{f:\ \hat p_h(f) \ge
 \lambda - n^{-1}K(0)\right\}.
\end{equation}
We have the following result
ensuring that the level sets of
the pseudo-density
have a well-defined finite sample predictive interpretation.
Its proof is analogous to that of ordinary
kernel density for vectors (see \cite{LeiRW11}).
\begin{lemma}
We have  $\mathbb{P}( X_{n+1}\in {\cal C}_{n,\alpha}) \geq 1-\alpha$
for all $P$ and $n$.  Furthermore,
${\cal C}_{n,\alpha} \subseteq {\cal C}_{n,\alpha}^+$ and hence,
$\mathbb{P}( X_{n+1}\in {\cal C}_{n,\alpha}^+) \geq 1-\alpha$
for all $P$ and $n$.
\end{lemma}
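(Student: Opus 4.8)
The plan is to prove the two assertions separately. The coverage bound $\pr(X_{n+1}\in\mathcal C_{n,\alpha})\ge 1-\alpha$ is an instance of the standard conformal exchangeability argument, whereas the inclusion $\mathcal C_{n,\alpha}\subseteq\mathcal C_{n,\alpha}^+$ is a deterministic fact quantifying how far the augmented pseudo-density $\hat p_h^f$ can sit from the plain estimator $\hat p_h$; the coverage of $\mathcal C_{n,\alpha}^+$ then follows immediately by combining the two.

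For the coverage statement I would first note that $\hat p_h^f(u)$ is precisely the pseudo-density estimator built from the augmented sample $\{X_1,\dots,X_n,f\}$ evaluated at $u$, so that the conformity scores $\sigma_i=\hat p_h^f(X_i)$ (for $i\le n$) together with $\sigma_{n+1}=\hat p_h^f(f)$ form a symmetric function of that sample. Under the null $X_{n+1}=f$ the augmented sample is exchangeable, hence the rank of $\sigma_{n+1}$ among $\sigma_1,\dots,\sigma_{n+1}$ is uniform on $\{1,\dots,n+1\}$. Thus $\pi(X_{n+1})$ is (stochastically) super-uniform and $\pr(\pi(X_{n+1})\ge\alpha)\ge 1-\alpha$, which is exactly the claim; this is the same reasoning as in Lemma \ref{lem:gen_conformal}.

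For the inclusion, fix any $f\in\mathcal C_{n,\alpha}$. Unpacking $\pi(f)\ge\alpha$ and using that $n\alpha$ is an integer with $0<\alpha<1$, the count $\sum_{i=1}^n\ind(\hat p_h^f(X_i)\le\hat p_h^f(f))$ must be at least $n\alpha$. I would then substitute the explicit form of $\hat p_h^f$, cancel the common factor $n/(n+1)$, and use $K(d(f,f)/h)=K(0)$ together with $0\le K(\cdot)\le K(0)$ to turn each inequality $\hat p_h^f(X_i)\le\hat p_h^f(f)$ into
$$\hat p_h(X_i)\le \hat p_h(f)+\tfrac{1}{n}\big(K(0)-K(d(X_i,f)/h)\big)\le \hat p_h(f)+\tfrac{K(0)}{n}.$$
Hence at least $n\alpha$ of the data points have $\hat p_h$ value not exceeding $\hat p_h(f)+K(0)/n$, which forces the $(n\alpha)$-th order statistic $\lambda=\hat p_h(X_{(n\alpha)})$ to satisfy $\lambda\le\hat p_h(f)+K(0)/n$, i.e. $\hat p_h(f)\ge\lambda-n^{-1}K(0)$. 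This is exactly the membership condition (\ref{eq:C+}), so $f\in\mathcal C_{n,\alpha}^+$.

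The hard part will be the perturbation bound in the displayed inequality: the conformity score is phrased through the $f$-dependent estimator $\hat p_h^f$, while $\mathcal C_{n,\alpha}^+$ is phrased through the fixed $\hat p_h$, so I must check that swapping one for the other shifts every comparison by at most $K(0)/n$ and in the correct direction. This is precisely where the hypothesis $K(z)\le K(0)$ (and kernel non-negativity) and the exact cancellation of the $n/(n+1)$ weights enter; the order-statistic counting that follows is then routine.
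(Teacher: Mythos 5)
Your proposal is correct and follows essentially the same route as the paper, which does not spell out the argument but defers to the analogous kernel-density proof in \cite{LeiRW11}: the standard exchangeability/rank argument for coverage of ${\cal C}_{n,\alpha}$, then the deterministic perturbation bound (using $K(z)\le K(0)$, implicit non-negativity of $K$, cancellation of the $n/(n+1)$ weights, and integrality of $n\alpha$) to show every $f$ with $\pi(f)\ge\alpha$ satisfies $\hat p_h(f)\ge \lambda - n^{-1}K(0)$. Your write-up simply makes explicit the details the paper leaves to the cited reference, including the order-statistic counting step, and they check out.
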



\begin{figure}
\includegraphics[scale=.47]{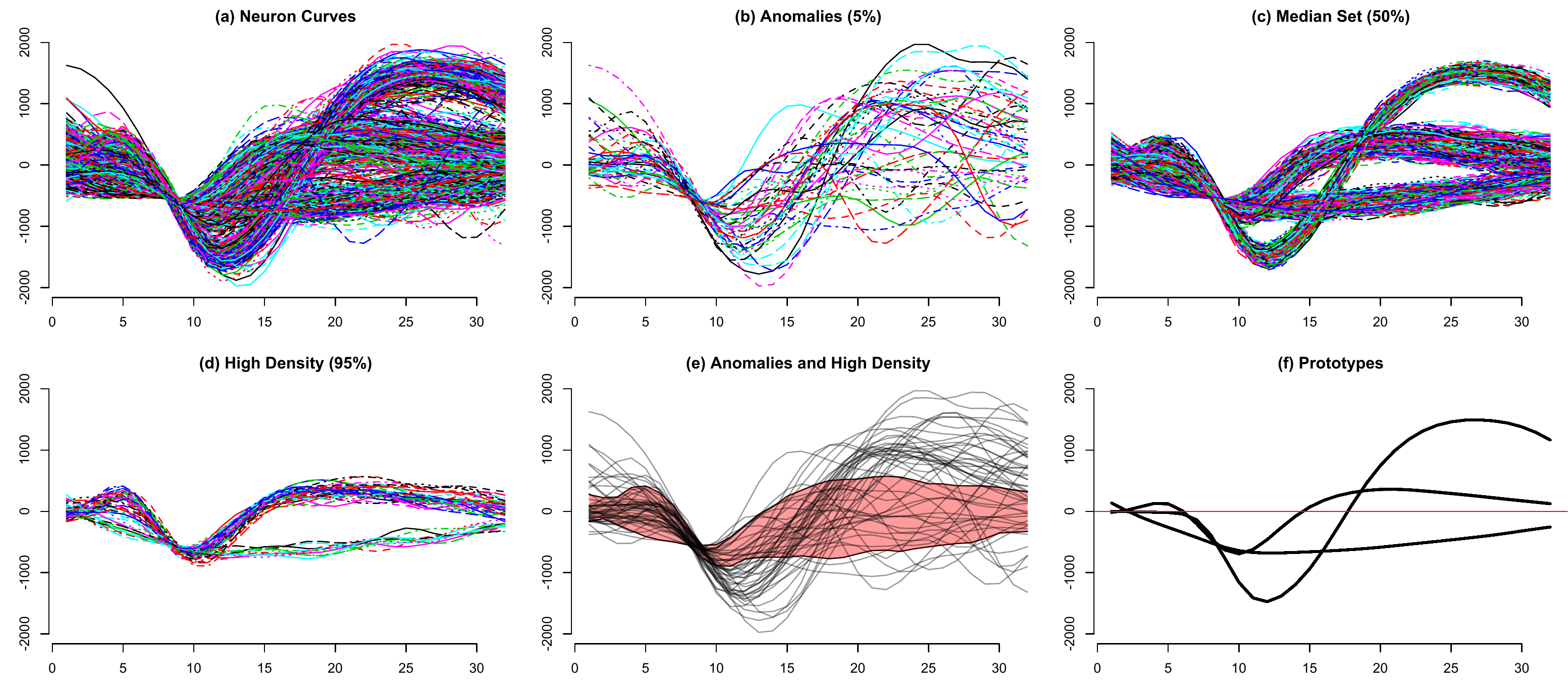} 
  \caption{Neuron data. $X$ axis: time. $Y$ axis: response. (a) 1,000 recordings of action potentials over time.
  (b-e) anormalies, median, and high density sample curves using pseudo-density.
  (f) mean-shift prototypes.}
  \label{fig:group_plot_neuron}
\end{figure}

Now we describe how to make use of and explore the set ${\cal
C}_{n,\alpha}^+$.
Let
$\hat {\mathcal C}_{n,\alpha} = {\cal C}_{n,\alpha}^+ \cap \{X_1,\ldots, X_n\}$.

\paragraph{Anomalies, median set, and high density curves}
We first consider three such sets of $\hat{\cal C}_{n,\alpha}$:
anomalies ($\alpha$ close to 0),
typical ($\alpha = .5$) and
high density ($\alpha$ close to 1).
Figures \ref{fig:group_plot_neuron} (b-e)
show the results using the distance
$d^2(f,g) = \int (f(t)-g(t))^2 dt$ for the neuron data.
It is easily seen that the median set successfully
captures the common shape of each group of neurons.  The
high density set contains curves in the two larger groups.
The anomalies consist of mostly curves with irregular shape.

\paragraph{Modes, prototypes, and mean shift}
Another summary of $\hat{\cal C}_{n,\alpha}$
are prototypes, that is, representative functions corresponding
to the local maxima of the pseudo density (see also the cluster tree below).
The modes of the pseudo-density can be obtained
using the mean-shift algorithm
\cite{cheng1995mean}.
Figure \ref{fig:group_plot_neuron}
(f) shows the three modes obtained in our example.
Two of the modes have the signature of a neuron firing
(a decrease followed by a sharp increase).
The third mode, which stays negative,
is unusual and deserves further attention.

\paragraph{Conformal cluster tree}
Here we use a cluster tree to visualize
how the conformal prediction set evolves as $\alpha$ changes
smoothly from $0$ to $1$.
For a given $\epsilon>0$, define the graph $G_{\alpha,\epsilon}$ whose nodes
correspond to the functions in
$\hat {\cal C}_{n,\alpha}$ and with edges
between nodes
$X_i$ and $X_j$
if $\int (X_i(t) - X_j(t))^2 dt \leq \epsilon^2$.
Define the level $\alpha$ clusters to be the partition of $\hat {\cal
C}_{n,\alpha}$ induced by
connected components of $G_{\alpha,\epsilon}$.
As the conformal parameter $\alpha$ varies from $0$ to $1$, the collection
$\mathcal{T}$ of all level $\alpha$ clusters form a tree (i.e. $A,B \in
\mathcal{T}$ implies that $A \cap B = \emptyset$ or $A \subset B$ or $B
\subset A$), which we call the {\it conformal tree}. The height of the tree is
indexed by $\alpha$, with the root of tree indexed by $\alpha = 0$,
corresponding to all the points in the dataset. As $\alpha$ increases the sets
$\hat {\cal C}_{n,\alpha}$ becomes smaller, and the leaves, which are
associated to local modes of the pseudo density, consist each of a single data
point. Such a conformal tree is similar to an ordinary clustering tree, plus
the additional feature of finite sample coverage and a different but quite natural
indexing on $\alpha$ rather than the usual indexing on the density itself.

The conformal tree provides a graphical representation of some distributional
properties of the data and, in particular, of the ``high-density" regions. As
such it allows us to identify salient features about the data that may be
otherwise hard to detect. Figure \ref{fig:trees}(a)
shows the conformity tree of
the neuron data based on the pseudo density estimator described above using
the $L_2$ distance and the Gaussian kernel. It is easy to see how the data
appear to arise from a mixture of three components (compare with Figure
\ref{fig:neuro_bands}), with splits occurring at values of $\alpha$ equal to
approximately $0.08$ and $0.24$. A further, more subtle,  split occurs at
$\alpha = 0.79$, and distinguishes between two groups of curves exhibiting
very similar but ultimately distinct behavior. The leaves  of the tree
correspond to prototypical curves of highest pseudo density within each
component.  In Figure \ref{fig:trees}(b) we present the
conformal tree with $h=\epsilon=1000$ as in Figure \ref{fig:group_plot_neuron}. The tree
is different but suggests strongly the three-component structure.  It also
rises the issue of choosing tuning parameters, an important problem for
future study.

\begin{figure}
    \includegraphics[scale = 0.45]{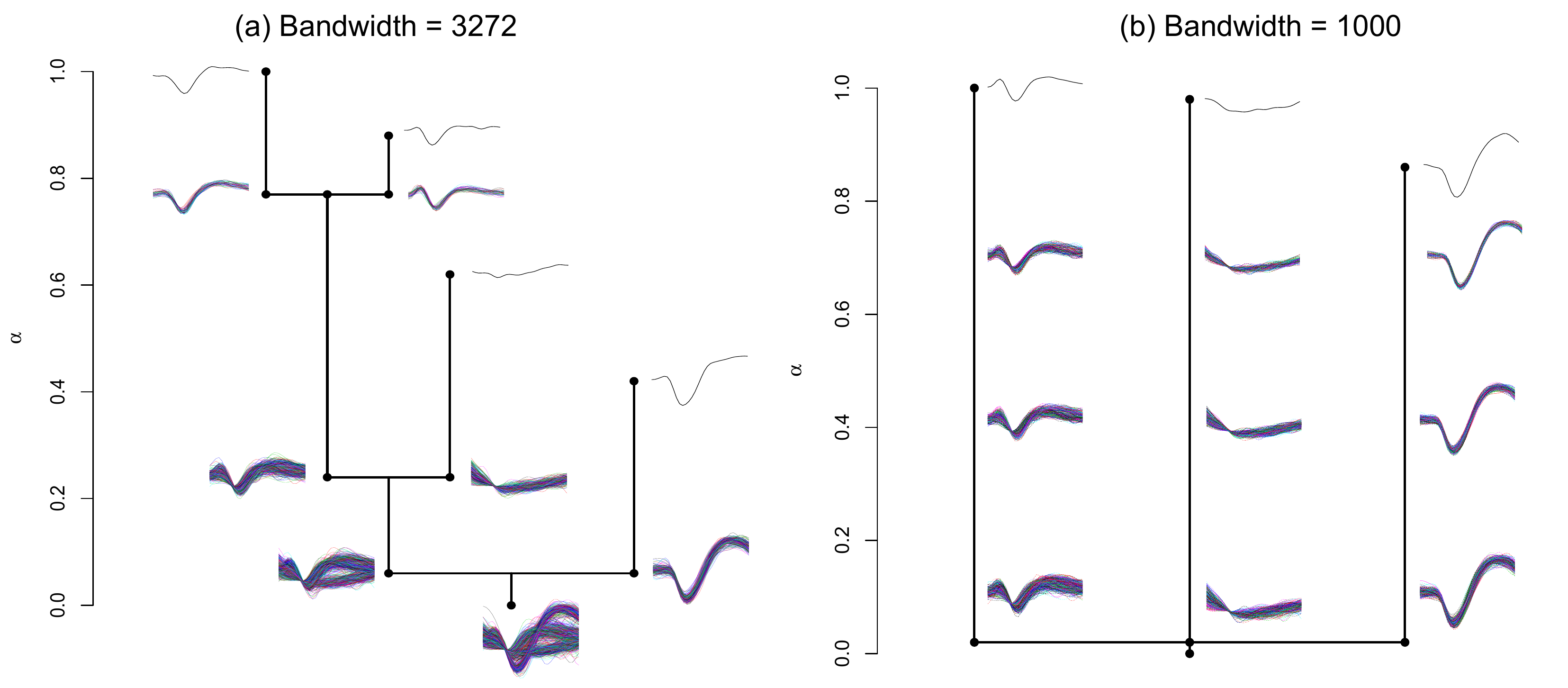}
  \caption{Conformal trees for the neuron data.
  (a): conformal tree with bandwidth $h$ chosen to maximize the variance
  of $\hat p_h(X_1),...,\hat p_h(X_n)$ and $\epsilon$ set to
  $0.5\times\max\{\min_{X_i,X_j\in \hat{\cal C}_{n,\alpha}}d(X_i,X_j)\}$.
  (b): conformal tree obtained using: $h = \epsilon = 1000$. The plots show
  the clusters for different values of $\alpha$.
Next to each node (e.g., split), we plot the curves belonging
to the elements of the corresponding partition of $\hat {\cal C}_{n,\alpha}$.
Isolated clusters of size smaller than 10 are ignored.}
  \label{fig:trees}
\end{figure}

\section{Conclusion and Future Directions}
We have extended conformal methods and its variants to functional data.
Using the inductive conformal predictor, together with basis projections,
we obtain the first
distribution-free simultaneous prediction bands
for functional data with finite sample performance guarantee.
These prediction bands can also be viewed as
quantile sets of the underlying process that corresponding to
high density region.
On the other hand,
using standard conformal prediction with pseudo densities,
the prediction set ${\cal C}_n$ also reveals hierarchical and
salient features in the data.
We have proposed methods
for extracting information from and visualizing ${\cal C}_n$.
Currently, we are investigating several open questions such
as extension of the new conformal method to other conformity scores
in high dimensional problems and
issues regarding optimality and the selection of tuning parameters.
For functional data, the choice of distance in pseudo densities
is also interesting.
It can be shown that for the $L_2$ distance,
the induced bands are unbounded.
The results using the distance
$d^2(f,g) = \sum_j e^{\gamma j}(\beta_j - \theta_j)^2$
for example are somewhat different (not shown).
Here,
$\{\beta_j\}$ and $\{\theta_j\}$
are the coefficients $f$ and $g$ in the cosine basis.
We call this the analytic distance as it
is related to the set of analytic functions; see
page 46 of \cite{efromovich1999nonparametric}.
This distance is much more sensitive to the shape of the curve.

\section*{Acknowledgments}
The authors thank Andy Schwartz, Valerie Ventura and Sonia Todorova for providing
 the data.  Jing Lei is
partially supported by National Science Foundation Grant BCS-0941518. Alessandro Rinaldo is
partially supported by National Science Foundation CAREER Grant DMS-1149677. Larry Wasserman is
partially supported by National Science Foundation Grant DMS-0806009 and Air Force Grant FA95500910373.



\bibliographystyle{apa-good}      
\bibliography{Curves}   

%
%

\end{document}